\documentclass[letterpaper, 10 pt, conference]{ieeeconf}  
\usepackage{amsmath}
\usepackage{bm}
\usepackage{amssymb}
\usepackage{xcolor}
\usepackage{algorithm,algorithmic}
\usepackage{booktabs}
\usepackage{graphicx}
\usepackage{cite}

\newtheorem{theorem}{Theorem}
\newtheorem{lemma}{Lemma}

\newtheorem{remark}{Remark}

\newcommand{\R}{\mathbb{R}}

\renewcommand{\S}{\mathbb{S}}

\newcommand{\diag}{\operatorname{diag}}

\IEEEoverridecommandlockouts                              

\title{\LARGE \bf
LMI-Net: Linear Matrix Inequality--Constrained Neural Networks via Differentiable Projection Layers
}

\author{Sunbochen Tang, Andrea Goertzen, and Navid Azizan
\thanks{The authors are with the Laboratory for Information and Decision Systems (LIDS), Massachusetts Institute of Technology, Cambridge, MA 02139, USA. Emails: {\tt\small \{tangsun, agoertz, azizan\}@mit.edu}}%
}

\begin{document}

\maketitle
\thispagestyle{empty}
\pagestyle{empty}

\begin{abstract}
    Linear matrix inequalities (LMIs) have played a central role in certifying stability, robustness, and forward invariance of dynamical systems. Despite rapid development in learning-based methods for control design and certificate synthesis, existing approaches often fail to preserve the hard matrix inequality constraints required for formal guarantees. We propose LMI-Net, an efficient and modular differentiable projection layer that enforces LMI constraints by construction. 
    Our approach lifts the set defined by LMI constraints into the intersection of an affine equality constraint and the positive semidefinite cone, performs the forward pass via Douglas–Rachford splitting, and supports efficient backward propagation through implicit differentiation. We establish theoretical guarantees that the projection layer converges to a feasible point, certifying that LMI-Net transforms a generic neural network into a reliable model satisfying LMI constraints. Evaluated on experiments including invariant ellipsoid synthesis and joint controller-and-certificate design for a family of disturbed linear systems, LMI-Net substantially improves feasibility over soft-constrained models under distribution shift while retaining fast inference speed, bridging semidefinite-program-based certification and modern learning techniques.
    
\end{abstract}

\section{Introduction}

Linear matrix inequalities (LMIs) have served as a unifying framework across a wide variety of important problems in dynamics and control \cite{boyd1994linear}, including stability certificate synthesis, robust invariance analysis, and control design. Although a single LMI-constrained problem can often be handled efficiently offline by numerical solvers, many applications involve families of related instances in which the same semidefinite programming problem template must be solved repeatedly \cite{alessio2009survey, apkarian2000parameterized, sambharya2023end, grontas2026pinet}. For example, this repeated parameterized structure appears when system parameters or operating conditions change, or disturbance descriptions vary. 
Under such settings, it is desirable to shift as much computation as possible offline by computing a function that maps problem parameters to solutions, so that a new instance can be handled by function evaluation rather than by solving a new optimization problem from scratch. This offline-online decomposition is closely related to the philosophy of explicit MPC \cite{alessio2009survey}, which yields piecewise-affine explicit control laws associated with multiparametric quadratic programming formulations. Since such piecewise-affine maps do not exist in general \cite{bellon2025parametric}, a learned optimizer that satisfies LMI constraints can be highly desirable for enabling fast online evaluation.


Recent years have seen rapid progress in learning-based methods for certificate synthesis and control design \cite{dawson2023safe}, including approaches that learn stability and safety certificates from data \cite{kolter2019learning, boffi2021learning, tang2024learning, goertzen2025eco}, as well as methods that jointly learn certificates and feedback control policies \cite{lindemann2021learning, min2023data, rezazadeh2022learning, tsukamoto2021contraction}. Because the objective is to establish certifiable stability or safety, satisfaction of the underlying certificate and controller constraints is essential \cite{dawson2023safe}. However, obtaining provable guarantees on the behavior of expressive neural models beyond the training data remains difficult, and constraint violations on previously unseen instances can invalidate the resulting certification \cite{boffi2021learning}.

A common way to encourage constraint satisfaction is to augment the training objective with sample-based regularization terms that penalize constraint violations \cite{dawson2023safe, tsukamoto2021contraction}. While such soft-constrained formulations can be effective empirically, they do not in general guarantee constraint satisfaction at inference time \cite{min2024hardnet, tordesillas2023rayen}, especially on inputs outside the training distribution. A complementary line of work therefore seeks hard feasibility by construction by designing differentiable layers that enforce constraints on the neural network output, as in \cite{kolter2019learning, min2024hardnet, tordesillas2023rayen}. These methods design enforcement mechanisms based on the structure of the constraints. Affine constraints are addressed in \cite{kolter2019learning, min2024hardnet} by designing closed-form projection layers, and convex constraints can be enforced by ray-based feasible parameterization in \cite{tordesillas2023rayen}, although the method is restricted to input-independent constraints. 
This leaves open the design of an efficient differentiable projection layer for LMI-constrained learning problems.

We address this challenge with an explicit, differentiable projection layer tailored to LMI constraints. The key observation driving our approach is that the feasible set of a parameterized LMI admits a lifted representation as the intersection of an affine equality constraint and the positive semidefinite cone. Leveraging this structure, we develop a projection mechanism based on the Douglas–Rachford algorithm~\cite{lions1979splitting}, an iterative splitting method recently shown to be effective in constrained learning contexts~\cite{sambharya2023end,grontas2026pinet}. Our approach enables both efficient forward-pass projections onto the decomposed constraint sets and implicit differentiation in the backward pass. The resulting layer is backbone-agnostic and converts repeated constrained optimization into a learned computation, enabling fast evaluation while satisfying LMI constraints at inference time. Our framework is, to our knowledge, the first to enforce input-dependent LMI constraints directly on neural network outputs, offering a scalable pathway for integrating convex control constraints into learning systems.


Our operator-splitting approach enables principled enforcement of LMI constraints within data-driven models. Our key contributions are as follows:
\begin{itemize}
    \item We develop a tailored splitting scheme for LMI constraints that decomposes the feasible set into components with tractable structure. This formulation admits explicit and efficient projections onto each set, making it well-suited for integration into differentiable architectures.
    \item We establish theoretical convergence guarantees by leveraging classical results for the Douglas–Rachford algorithm. These guarantees provide a rigorous foundation for the correctness and stability of the proposed projection layer.
    \item We present experimental results demonstrating consistent constraint satisfaction and stable behavior across a range of settings. In particular, our approach maintains reliability under both in-distribution and out-of-distribution inputs, highlighting its robustness in practical deployment scenarios.
\end{itemize}

\section{Problem Formulation}

\subsection{Parameterized Optimization under LMI Constraints}
Consider the parameterized optimization problem,
\begin{equation}\label{eq:LMI-opt}
    \begin{split}
        \min_{y} c(y), \text{subject to } F(y; \xi) \triangleq F_0(\xi) + \sum_{i=1}^m y_i F_i(\xi) \succeq 0,
    \end{split}
\end{equation}
where the symmetric matrices $F_i(\xi) \in \S^n, i \in \{0, ..., m\}$ are known and parameterized by $\xi \in \R^p$, $c(\cdot)$ is a cost function. $y \in \R^m$ is the decision variable, and the optimal solution is a function of $\xi$, $y^*(\xi)$. Note that $F(y; \xi) \succeq 0$ is a general form that can represent a set of LMIs, as multiple LMIs $F^{(1)}(y; \xi) \succeq 0, ..., F^{(N)}(y; \xi) \succeq 0$ can be reformulated as $F(y; \xi) = \diag{(F^{(1)}(y; \xi), ..., F^{(N)}(y; \xi))}\succeq 0$.

The optimization in \eqref{eq:LMI-opt} is a reduction of many problems in control theory, where the structure of the objective function $c(\cdot)$ and constraints remain fixed for a family of systems, and $\xi$ encodes system-specific parameters. For example, synthesizing a Lyapunov stability certificate for a family of linear systems, $\{\dot{x} = Ax: A \in S(A)\}$ where $S(A)$ is a set of Hurwitz matrices, can be formulated as solving a semidefinite program parameterized by $A$. Given a specific $A$, the stability certificate synthesis problem is then an instance of the parameterized optimization problem. Instead of repeatedly invoking a numerical solver for each different instance of $A$, learning a map $\xi\to y^*(\xi)$ that approximates the optimizer can significantly speed up computation, which is especially beneficial in real-time or distributed settings. 

\subsection{Self-supervised Learning with Feasibility by Construction}
The objective is to learn a neural network $\hat{y}(\xi; \theta)$ that approximates the optimal solution of \eqref{eq:LMI-opt}. Instead of using labeled data that relies on a solver to provide supervised signals, we adopt a self-supervised setting, where we define the training loss to be the average cost function value over different $\xi$. More formally, given a dataset consisting of $\xi$ drawn from a known distribution, $D_{\text{train}} = \{\xi^{(i)}\}_{i=1}^{N_{\text{train}}}$, the training process solves the following optimization problem,
\begin{subequations}\label{eq:learning}
\begin{align}
    L(\theta) &= \frac{1}{N_{\text{train}}}\sum_{i=1}^{N_{\text{train}}} c(\hat{y}(\xi^{(i)};\theta)), \label{eq:learning-obj}\\
    \text{subject to} \quad &F(\hat{y}(\xi; \theta); \xi) \succeq 0,\quad \forall \xi \in \Xi, \label{eq:learning-constr}
\end{align}
\end{subequations}
where $\Xi \subset \R^p$ is the set of admissible parameter values.
Our goal is to design a learned optimizer that is \textit{feasible by construction}. The constraint in \eqref{eq:learning-constr} needs to be satisfied for all admissible parameters $\xi\in \Xi$, not just for those in the training data $D_{\text{train}}$. 

For problems such as stability certificate synthesis and control design, feasibility by construction is highly desirable, enabling formal guarantees of safety and stability in physical system applications. 
To enforce feasibility, we define a context-dependent projection, $\Pi_{\xi}: \R^m \to \R^m$ that maps any generic neural network output $y_{\text{NN}}(\xi; \theta) \in \R^m$ to a feasible point $\hat{y}(\xi; \theta) = \Pi_{\xi}(y_{\text{NN}}(\xi;\theta))$ satisfying $F(\hat{y}(\xi; \theta); \xi) \succeq 0$. Key requirements for such a projection operator $\Pi_{\xi}$ include: (i) the output needs to be provably feasible for all inputs; (ii) $\Pi_{\xi}$ needs to be fully differentiable for training purposes; (iii) $\Pi_{\xi}$ is computationally efficient during inference.
As studied extensively in the literature \cite{boyd1994linear}, developing an explicit projection operator for an LMI constraint is difficult in general. We address this issue by decomposing the LMI constraint into the intersection of an affine constraint and a positive semidefinite cone constraint, and leveraging the Douglas-Rachford algorithm for efficient computation and provable convergence to a feasible point. The details of our approach are discussed in Section~\ref{sec:DR}.

\subsection{Illustrative Example: Ellipsoidal Invariant Sets for Disturbed Linear Systems}\label{sec:example}
Consider a linear system under disturbance,
\begin{equation}\label{eq:exp1-sys}
    \dot{x} = Ax + B_w w, \quad w^T w \leq 1,
\end{equation}
where $A \in \R^{n \times n}$ is a Hurwitz matrix, $B_w \in \R^{n \times n_w}$, and $w$ is a norm-bounded disturbance. The goal is to find an ellipsoidal set $\mathcal{E}(P) = \{x : x^T P x \leq 1\}$ with $P \succ 0$ that is forward invariant for \eqref{eq:exp1-sys}.

Consider a Lyapunov-like storage function candidate $V(x) = x^T P x$ with $P \succ 0$. A sufficient condition for $\mathcal{E}(P)$ to be robustly invariant is
\begin{equation}\label{eq:exp1-suff}
    \dot{V}(x,w) \leq 0 \quad \forall (x,w) \text{ s.t. } w^T w \leq 1, \; V(x) \geq 1,
\end{equation}
where $\dot{V}(x,w) = x^T(A^T P + PA)x + 2x^T P B_w w$. Using the S-procedure \cite{yakubovich1977sprocedure}, \eqref{eq:exp1-suff} holds if there exist $\alpha, \beta \geq 0$ such that for all $x$ and $w$,
\begin{equation*}
    \dot{V}(x,w) - \beta (w^T w - 1) + \alpha(x^T P x - 1) \leq 0.
\end{equation*}
Without loss of generality, assuming $\alpha = \beta \geq 0$, the above condition, combined with $P \succ 0$, simplifies  to
\begin{equation}\label{eq:exp1-lmi-simple}
    \begin{bmatrix} A^T P + PA - \alpha P & PB_w \\ B_w^T P & -\alpha I \end{bmatrix} \preceq 0, \qquad P \succeq \epsilon I
\end{equation}
For fixed $\alpha \geq 0$ and $\epsilon > 0$, the optimization problem is: find $P$ subject to \eqref{eq:exp1-lmi-simple}. The objective can be chosen as minimizing the volume of $\mathcal{E}(P)$, i.e., $\min -\log \det P$.

Since $P \in \S^n$, it has $m = \frac{n(n+1)}{2}$ degrees of freedom. Let $\{E_j\}_{j=1}^{m}$ be an orthonormal basis for $\S^n$ under the Frobenius inner product. We parameterize $P$ as
\begin{equation}\label{eq:P-param}
    P(y) = \sum_{j=1}^{m} y_j E_j, \quad y \in \R^m.
\end{equation}
Substituting into \eqref{eq:exp1-lmi-simple}, the constraint takes the standard form \eqref{eq:LMI-opt} with $\xi = (A, B_w)$. The learned mapping is $\xi \mapsto y$, where the neural network outputs the coefficients of a feasible $P$.

\section{Differentiable Projection Layers via Douglas-Rachford Splitting}\label{sec:DR}
\subsection{The Douglas-Rachford Algorithm}
The Douglas-Rachford algorithm is used to solve optimization problems of the form

\begin{equation}\label{eq:DR-obj}
    \underset{z}{\text{argmin }}f_\xi(z)+g_\xi(z).
\end{equation}

Here, $z\in\mathbb{R}^l$ is a decision variable, $\xi\in\mathbb{R}^p$ is a context parameter, and $f_\xi:\mathbb{R}^l\to\mathbb{R}$ and $g_\xi:\mathbb{R}^l\to\mathbb{R}$ are convex objectives. 
Douglas-Rachford solves the combined objective via an iterative method that alternates between proximal and reflection steps. Specifically, a Douglas-Rachford iteration is performed as follows.
\begin{equation}\label{eq:DR-alg}
    \begin{alignedat}{2}
        {w}_{k+1}&=\text{prox}_{f\sigma}(z_k) &&\quad\quad\text{proximal point of }f\\
        \bar{w}_{k+1}&=2{w}_{k+1} -z_k&&\quad\quad\text{reflection across } w_{k+1}\\ 
        {v}_{k+1} &= \text{prox}_{g\sigma}(\bar{w}_{k+1})&&\quad\quad\text{proximal point of }g\\
        \bar{v}_{k+1}&=2{v}_{k+1} -\bar{w}_{k+1}&&\quad\quad\text{reflection across }v_{k+1}\\
        z_{k+1}&=\frac{\bar{v}_{k+1}+z_k}{2}&&\quad\quad\text{averaging with current iterate}
    \end{alignedat}
\end{equation}

The objectives are incorporated to each iteration via the proximal operator, $\text{prox}_{h\sigma}(\bar{z})=\underset{z}{\text{argmin }}h(z)+\frac{1}{2\sigma}||\bar{z}-z||^2$, which balances minimizing the specific objective $h$ with remaining close to the input point $\bar{z}$. Douglas-Rachford is useful for problems in which the combined objective in \eqref{eq:DR-obj} is difficult to solve but the proximal operator for both $f_\xi(z)$ and $g_\xi(z)$ is straightforward to compute, particularly when the solutions can be evaluated in closed-form. 
\subsection{Douglas-Rachford for Feasibility Problems}
The Douglas-Rachford algorithm can be used to solve feasibility problems by formulating constraint satisfaction as a convex optimization problem. Consider two convex constraint sets $\mathcal{C}_1$ and $\mathcal{C}_2$ with $\mathcal{C}_1\cap\mathcal{C}_2\neq\emptyset$. Define $f(z)$ and $g(z)$ in \eqref{eq:DR-obj} with $\mathcal{I}_{\mathcal{C}_1}(z)$ and $\mathcal{I}_{\mathcal{C}_2}(z)$, respectively. $\mathcal{I}_{\mathcal{C}_i}(z)$ is defined to be $0$ when the constraint $z\in\mathcal{C}_i$ is satisfied and $\infty$ otherwise. With this definition for $f(z)$ and $g(z)$, it is clear that the solution to \eqref{eq:DR-obj} occurs only when both constraints are satisfied. Computing the proximal solution for each of the two sets is therefore equivalent to solving $\text{prox}_{\mathcal{C}_i\sigma}(\bar{z}) = \underset{z}{\text{argmin }}\mathcal{I}_{\mathcal{C}_i}(z)+\frac{1}{2\sigma}||\bar{z}-z||^2$. That is, for the feasibility problem, the proximal solution step in \eqref{eq:DR-alg} is simply the Euclidean projection onto the respective constraint. Although we drop the dependence on the context parameter $\xi$ for $f_\xi$ and $g_\xi$ for notational convenience, we emphasize that Douglas-Rachford readily handles context-dependent constraints, so long as they remain convex.

For a constraint set $\mathcal{C}$, Douglas-Rachford offers an efficient projection of points onto the feasible set when $\mathcal{C}$ can be decomposed into two sets $\mathcal{C}_1$ and $\mathcal{C}_2$ (with $\mathcal{C}=\mathcal{C}_1\cap\mathcal{C}_2$) whose projections are readily computable. While many splits for $\mathcal{C}_1$ and $\mathcal{C}_2$ may exist, selecting a splitting scheme where the respective Euclidean projections onto each constraint set are computable in closed-form reduces computational burden. Therefore, the efficiency provided by Douglas-Rachford for feasibility problems is often ultimately enabled by the choice of splitting scheme, making the selection of the right scheme for the right problem an important strategic objective. 


\subsection{LMI as the Intersection of Two Convex Sets}


In this work, we propose a splitting scheme that decomposes the LMI condition $F(y)\succeq0$ into the intersection of an affine equality condition and the positive semidefinite cone. Although the projection onto $F(y)\succeq0$ is generally intractable, our splitting scheme enables efficient projection onto the two individual sets. 

\begin{equation}\label{eq:LMI_set}
\begin{split}
    \mathcal{C}=\Pi_m(\mathcal{C}_1\cap\mathcal{C}_2),\quad\mathcal{C}_1&=\left\{\begin{bmatrix}y\\x\end{bmatrix}:F(y)=X\right\},\\\quad\mathcal{C}_2&=\left\{\begin{bmatrix}y\\x\end{bmatrix}:y\in\mathbb{R}^m,X\succeq0\right\}.
    \end{split}
\end{equation}
Here, $x=\text{vec}(X)\in\mathbb{R}^{n^2}$ is an auxiliary variable included as an intermediate to enable efficient projection onto the positive semidefinite cone. 
The projection $\Pi_m:\mathbb{R}^{m+n^2}\to\mathbb{R}^m$ is a final projection onto the first $m$ entries of $z=[y^T \,\,\,x^T]^T$, selecting only $y$ as an output and ignoring the auxiliary variable $x$. 
The intersection $\mathcal{C}_1\cap\mathcal{C}_2$ is clearly equivalent to the LMI condition $F(y)\succeq0$. We define an optimization problem of the form \eqref{eq:DR-obj} for the closest projection of the neural network output $\hat{y}_\theta$ onto the constraint $\mathcal{C}$.
\begin{equation}\label{eq:enforce-obj}
    z^*=\underset{z}{\text{argmin }}||\hat{y}_\theta-y||^2 + \mathcal{I}_{\mathcal{C}_1}(z) +\mathcal{I}_{\mathcal{C}_2}(z)
\end{equation}

Note that $y=\Pi_m(z)=z_{1:m}$. Equation~\eqref{eq:enforce-obj} can be separated into two convex objective functions $f(z)=||y-\hat{y}_\theta||^2 + \mathcal{I}_{\mathcal{C}_1}(z)$ and $g(z)=\mathcal{I}_{\mathcal{C}_2}(z)$. We now propose efficient computations of the proximal operator for each of the two objectives. We use $\bar{\cdot}$ to denote variables that are the input to the proximal projection. For $g(z)=\mathcal{I}_{\mathcal{C}_2}(z)$, the proximal operator is the minimum-distance projection onto the set $\mathcal{C}_2$. We compute this projection efficiently via an eigenvalue clipping operation. Specifically, let $\bar{X}=U\Lambda U^T$ be the eigendecomposition of the symmetric matrix $\bar{X}$. We can then define the projection
\begin{equation}\label{eq:proj-C2}
    \Pi_{\mathcal{C}_2}(\bar{X}) = U\text{max}(0,\Lambda)U^T.
\end{equation}
The max operation is applied element-wise to the eigenvalue matrix $\Lambda$. Equation~\eqref{eq:proj-C2} clearly outputs a positive semidefinite matrix.

We now define a closed-form solution for the projection onto the constraint $\mathcal{C}_1$. For $f(z)=||\hat{y}_\theta-y||^2 + \mathcal{I}_{\mathcal{C}_1}(z)$, the proximal operator is $\text{prox}_{f\sigma}(\bar{z})=\underset{z}{\text{argmin }}||y-\hat{y}_\theta||^2 + \mathcal{I}_{\mathcal{C}_1}(z)+\frac{1}{2\sigma}||\bar{z}-z||^2$. This differs from a Euclidean projection from the input point $\bar{z}$, because it balances minimizing the distance between the projected point and both the input point $\bar{z}$ \textit{and} the model output $\hat{y}_\theta$. The tradeoff between these two objectives for a given Douglas-Rachford iteration is tuned with $\sigma$. By expanding the competing objectives and completing the square, we can write the objective as a Euclidean projection from a point that is a weighted average of $\hat{y}_\theta$ and $\bar{z}$. 
\begin{equation}\label{eq:C1-prox}
    \begin{split}
    \text{prox}_{f\sigma}(\bar{z})=\underset{z}{\text{argmin }} \frac{1}{2}\left|\left|y-\frac{1}{2\sigma+1}(2\sigma\hat{y}_\theta+\bar{y})\right|\right|^2 +\\  \frac{1}{2} ||X-\bar{X}||_F^2+ \mathcal{I}_{\mathcal{C}_1}(z)
    \end{split}
\end{equation}

Note that the optimization variable $z$ includes both $y$ and the auxiliary variable $X$. We now define a closed-form solution for the projection onto constraint $\mathcal{C}_1$. Our proposed constraint decomposition scheme in \eqref{eq:LMI_set} makes the necessary projection onto $\mathcal{C}_1$ linear in $[y^T\,\,\,x^T]^T$, enabling the use of a closed-form linear equality constraint projection. We define the projection 
\begin{equation}\label{eq:proj-C1-int1}
\begin{split}
    \Pi_{\mathcal{C}_1}\left(\begin{bmatrix}\bar{y}\\\bar{x}\end{bmatrix},\hat{y}_\theta\right) &= \underset{y,X}{\text{argmin }}\frac{1}{2}||y-y_\text{avg}||^2+\frac{1}{2}||X-\bar{X}||_F^2\\
    &\text{subject to } F(y)=X,
    \end{split}
\end{equation}
where $y_\text{avg} = \frac{1}{2\sigma+1}(2\sigma\hat{y}_\theta+\bar{y})$. Note that there is no need to define an $X_\text{avg}$, since $X$ is an auxiliary variable that does not have a corresponding neural network output. By vectorizing the matrix $X$, this problem becomes a Euclidean distance minimization subject to linear constraints on $y$ and $X$. We vectorize $\bar{x}=\text{vec}(\bar{X})$, $c=\text{vec}(F_0)$, and $L=\left[\text{vec}(F_1),\text{vec}(F_2),...,\text{vec}(F_m)\right]$. The constraint $F(y)=X$ is now defined by $Ly+c=x$, a linear combination of vectors, rather than a linear combination of matrices. We substitute the constraint into \eqref{eq:proj-C1-int1}, 
\begin{equation}\label{eq:proj-C1-int2}
    \Pi_{\mathcal{C}_1}\left(\begin{bmatrix}\bar{y}\\\bar{x}\end{bmatrix}\right) = \underset{y,x}{\text{argmin }}\frac{1}{2}||y-y_\text{avg}||^2+\frac{1}{2}||Ly+c-\bar{x}||^2,\\
\end{equation}
A closed-form solution to \eqref{eq:proj-C1-int2} can be computed by setting the gradient of the objective to $0$. This gives the closed-form projection.
\begin{equation}\label{eq:proj-C1}
    \begin{split} &\Pi_{\mathcal{C}_1}\left(\begin{bmatrix}\bar{y}\\\bar{x}\end{bmatrix},\hat{y}_\theta\right)=\begin{bmatrix}
        y^*\\Ly^*+c
    \end{bmatrix}\\&y^*=\left(I+L^T L\right)^{-1}\left(y_\text{avg}-L^T(c-\bar{x})\right)
    \end{split}
\end{equation}


The alternating projection proposed can be summarized as a decomposition of the projection onto $\mathcal{C}$ into two sub-projections that are efficiently computable in closed form. In practice, we exploit the symmetry of $X$ to solve for only $n(n+1)/2$ auxiliary variables, using a weighted matrix for the projection $\Pi_{\mathcal{C}_1}$ such that the projection is computed with respect to the Frobenius norm of the full matrix $X$ as in Equation~\eqref{eq:proj-C1-int2}. The projection layer here is backbone-agnostic, meaning it can be used with any neural network backbone to enforce LMI constraints. Algorithm~\ref{alg:LMI} describes the end-to-end constraint enforcement procedure using Douglas-Rachford. 
\begin{algorithm}
\caption{Neural Network with LMI Constraint Enforcement Forward Pass}
\begin{algorithmic}[1]
\label{alg:LMI}

\renewcommand{\algorithmicrequire}{\textbf{Input:}}
\renewcommand{\algorithmicensure}{\textbf{Output:}}

\REQUIRE $\xi$, $n_\text{iter}$, $\sigma$, $\theta$
\ENSURE $y^*$

\STATE neural network prediction $\hat{y}_\theta \leftarrow f_\theta(\xi)$
\STATE initialize $z_0\leftarrow[y_0\quad x_0]^T\in\mathbb{R}^{m+n^2}$

\FOR{$k=0$ \TO $n_{\text{iter}}-1$}
    \STATE $w_{k+1} \leftarrow \Pi_{\mathcal{C}_1}(z_k,\hat{y}_\theta)$ \hfill \eqref{eq:proj-C1}
    \STATE $v_{k+1} \leftarrow \Pi_{\mathcal{C}_2}(2w_{k+1}-z_k)$ \hfill \eqref{eq:proj-C2}
    \STATE $z_{k+1} \leftarrow
    v_{k+1}-w_{k+1}+z_k$
\ENDFOR

\STATE $y^* \leftarrow \Pi_m(\Pi_{\mathcal{C}_1}(z_{n_\text{iter}},\hat{y}_\theta))$
\end{algorithmic}
\end{algorithm}

\subsection{Backpropagation via Implicit Differentiation}
To compute gradients during training, we use an implicit differentiation scheme, introduced in~\cite{grontas2026pinet}, to avoid differentiating through all $n_\text{iter}$ iterations of the Douglas-Rachford algorithm. The gradients of the neural network output $\hat{y}_\theta$ with respect to the parameters $\theta$ can be computed with standard backpropagation approaches, so we focus on the computation of gradients through the Douglas-Rachford operation. That is, we seek an efficient computation for
\begin{equation}\label{eq:diff}\frac{\partial y^*}{\partial\hat{y}_\theta}=\frac{\partial \Pi_m(\Pi_{\mathcal{C}_1}(z_{n_\text{iter}}(\hat{y}_\theta),\hat{y}_\theta))}{\partial\hat{y}_\theta}.\end{equation}
We follow the approach in \cite{grontas2026pinet}, leveraging the implicit function theorem to efficiently compute the vector-Jacobian product (VJP) with the Jacobian in \eqref{eq:diff}. We are specifically interested in calculating the VJP
\begin{equation}
\begin{split}
v^T\frac{\partial \Pi_m(\Pi_{\mathcal{C}_1}(z_{n_\text{iter}}(\hat{y}_\theta),\hat{y}_\theta))}{\partial\hat{y}_\theta}=[v^T\,\,\ \bm{0}]\frac{\partial\Pi_{\mathcal{C}_1}(z_{n_\text{iter}}(\hat{y}_\theta),\hat{y}_\theta)}{\partial\hat{y}_\theta}\\=[v^T\,\,\ \bm{0}]\left[\frac{\partial\Pi_{\mathcal{C}_1}}{\partial z_{n_\text{iter}}}\frac{\partial z_{n_\text{iter}}(\hat{y}_\theta)}{\partial\hat{y}_\theta} +  \frac{\partial\Pi_{\mathcal{C}_1}}{\partial \hat{y}_\theta}\right].
\end{split}
\end{equation}

Since $\Pi_{\mathcal{C}_1}$ is linear in both $z$ and $\hat{y}_\theta$, its gradients with respect to $z_{n_\text{iter}}$ and $\hat{y}_\theta$ are straightforward to compute. We focus on computing the VJP 
$$ v^T\frac{\partial z_{n_\text{iter}}(\hat{y}_\theta)}{\partial\hat{y}_\theta}.$$

Computing $\partial z_k/\partial\hat{y}_\theta$ in general requires differentiation through each iteration of the Douglas-Rachford algorithm, since $z_{k+1}(\hat{y}_\theta)=\Phi(z_{k}(\hat{y}_\theta),\hat{y}_\theta)$, with $\Phi$ being a single Douglas-Rachford iteration. This problem is avoided at the fixed point $z^\star$ where $z^\star(\hat{y}_\theta)=\Phi(z^\star(\hat{y}_\theta),\hat{y}_\theta)$. The implicit function theorem therefore gives 

\begin{equation}\left(I_{m+n^2}-\frac{\partial \Phi}{\partial z}\middle|_{z=z^\star}\right)\frac{\partial z}{\partial{\hat{y}_\theta}}=\frac{\partial \Phi}{\partial{\hat{y}_\theta}}.\end{equation}

Note that since $z^\star$ is computationally intractable, we evaluate $\frac{\partial\Phi}{\partial z}$ at $z_{n_\text{iter}}$ in practice. We could solve this linear system to isolate $\frac{\partial z}{\partial{\hat{y}_\theta}}$, but that adds unnecessary computational burden, since we are more interested in the VJP  $v^T\frac{\partial z}{\partial{\hat{y}_\theta}}$. We instead define a vector $\lambda$ that is the solution to the linear system 

\begin{equation}\lambda^T\left(I_{m+n^2}-\frac{\partial \Phi}{\partial z}\middle|_{z=z^\star}\right)=v^T.\end{equation}
This gives the following VJP of interest.
\begin{equation}v^T\frac{\partial z}{\partial\hat{y}_\theta} = \lambda^T\frac{\partial\Phi}{\partial\hat{y}_\theta}\end{equation}
This strategy of computing gradients through the projection layer via implicit differentiation, rather than considering every iteration in Algorithm~\ref{alg:LMI}, provides an efficient backpropagation scheme for training. 

\section{Convergence Analysis}\label{sec:theory}
In this section, we show that the LMI-Net alternating projection satisfies standard Douglas-Rachford assumptions and therefore converges to a point that satisfies the LMI constraint. 
\begin{lemma}[Eigenvalue clipping as a Euclidean projection]\label{lemma:eigen-clip}

For a symmetric matrix $\bar{X}$ with eigendecomposition $U\Lambda U^T$, the eigenvalue clipping operation $U\text{max}(0,\Lambda)U^T$ is the Euclidean projection onto the positive semidefinite cone. 
\end{lemma}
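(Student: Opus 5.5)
The plan is to prove that $X^\star := U\max(0,\Lambda)U^T$ minimizes $\|X-\bar{X}\|_F$ over $X\succeq 0$. I will do this by reducing to a diagonal problem using the orthogonal invariance of the Frobenius norm, and then checking optimality directly.

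\textbf{Step 1 (existence, uniqueness, feasibility).} The positive semidefinite cone $\S^n_+$ is closed and convex, and $\|\cdot\|_F$ is the norm induced by the Frobenius inner product $\langle A,B\rangle=\operatorname{tr}(A^TB)$. Hence the projection onto $\S^n_+$ exists and is unique. The candidate $X^\star$ is symmetric with eigenvalues $\max(0,\lambda_i)\geq 0$, so $X^\star\succeq 0$.

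\textbf{Step 2 (reduction to diagonal).} For any $X\succeq 0$, set $Y=U^TXU$. Then $Y\succeq 0$ and $\|X-\bar{X}\|_F=\|Y-\Lambda\|_F$, since $U$ is orthogonal. Also, $Y\succeq 0$ implies $Y_{ii}\geq 0$. This gives
\begin{equation*}
\|Y-\Lambda\|_F^2=\sum_i (Y_{ii}-\lambda_i)^2+\sum_{i\neq j}Y_{ij}^2\;\geq\;\sum_i (Y_{ii}-\lambda_i)^2\;\geq\;\sum_{i:\lambda_i<0}\lambda_i^2 .
\end{equation*}
The last inequality holds because each term satisfies $(Y_{ii}-\lambda_i)^2\geq \min_{t\geq 0}(t-\lambda_i)^2=\min(0,\lambda_i)^2$.

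\textbf{Step 3 (attainment).} For $X=X^\star$ we have $Y=\max(0,\Lambda)$, so $\|Y-\Lambda\|_F^2=\sum_i\min(0,\lambda_i)^2$. This attains the lower bound from Step 2, so $X^\star$ is a minimizer. By the uniqueness in Step 1, it is \emph{the} Euclidean projection. As an independent check, I will verify the variational characterization: $\bar{X}-X^\star=U\min(0,\Lambda)U^T\preceq 0$, and $\langle \bar{X}-X^\star, X^\star\rangle=\operatorname{tr}(\min(0,\Lambda)\max(0,\Lambda))=0$. Consequently $\langle \bar{X}-X^\star, X-X^\star\rangle=\langle \bar{X}-X^\star,X\rangle\leq 0$ for all $X\succeq 0$, because the inner product of an NSD matrix and a PSD matrix is nonpositive.

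\textbf{Main obstacle.} The only delicate point is the norm used for the projection. The paper works with $x=\text{vec}(X)\in\R^{n^2}$, and the Euclidean norm of $\text{vec}(X)$ equals $\|X\|_F$, so the result applies directly. In the implementation, the $n(n+1)/2$ reduced coordinates are weighted to reproduce the Frobenius norm, so the result applies there as well. I would also note that when eigenvalues are repeated, $U$ is not unique, but $X^\star$ is still well-defined, either by the uniqueness argument in Step 1 or because $\max(0,\cdot)$ acts consistently on each eigenspace.
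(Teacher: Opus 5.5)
Your proof is correct and follows essentially the same route as the paper: conjugate by $U$ to reduce to $\|W-\Lambda\|_F$ with $W=U^TXU\succeq 0$, drop the off-diagonal terms using $W_{ii}\geq 0$, and solve the resulting scalar problems by clipping. Your lower-bound-plus-attainment phrasing, together with the uniqueness argument and the variational-inequality check, makes the paper's ``the optimal $W$ is diagonal'' step slightly more rigorous without changing the underlying argument.
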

\begin{proof}
     The Euclidean projection onto the positive semidefinite cone is 
    $$\underset{X}{\text{argmin}}||\bar{X}-X||^2_F \text{ subject to }X\in\mathbb{S}_+$$
    where $\mathbb{S}_+$ denotes the set of all real symmetric positive semidefinite matrices. Note that for a symmetric matrix, the eigenvector matrix $U$ is unitary (i.e., $U^TU=I$). The Frobenius norm is unitarily invariant, which gives
    $$||\bar{X}-X||^2_F=||U^T\bar{X}U-U^TXU||^2_F=||\Lambda-U^TXU||^2_F.$$
    $$\underset{x}{\text{argmin}}||\Lambda-U^TXU||^2_F \text{ subject to }X\in\mathbb{S}_+.$$
    Define $W=U^TXU$. When $X$ is positive semidefinite, $W$ is too. To see this, consider $z^TWz=z^TU^TXUz=\bar{z}^TX\bar{z}$ with $\bar{z}=Uz$. Clearly, when $X\succeq0$, then $W\succeq0$. The Euclidean projection can then be written as $X^*=UW^*U^T$, where
    $$W^*=\underset{W}{\text{argmin}}||\Lambda-W||^2_F \text{ subject to }W\in\mathbb{S}_+$$
    $$=\underset{W}{\text{argmin}}\sum_i\sum_j(\Lambda_{ij}-W_{ij})^2 \text{ subject to }W\in\mathbb{S}_+$$
    $$=\underset{W\in\mathbb{S}_+}{\text{argmin}}\sum_i(\Lambda_{ii}-W_{ii})^2+\sum_{i\neq j}(W_{ij})^2.$$

    The optimal $W$ is therefore diagonal. To see this, consider $D=\text{diag}\left(\left[W_{11},W_{22},\dots,W_{nn}\right]\right)$. When $W\succeq0$, its diagonals are nonnegative, so $D\succeq0$. $D$ never gives a larger objective value than $W$, since the off-diagonals can only increase the objective. Therefore, the optimal $W$ will be diagonal, giving the new objective 
    $$w^*=\underset{w}{\text{argmin}}\sum_i(\lambda_{i}-w_{i})^2 \text{ subject to }w_i\geq0,$$
    where $\lambda$ and $w$ are the diagonal elements of $\Lambda$ and $W$, respectively. Clearly, this objective is minimized with the clipping operator $w=\text{max}(0,\lambda)$. This gives $X^*=UW^*U^T=U\text{max}(0,\Lambda)U^T$, which is equivalent to the eigenvalue clipping operation. 
    
\end{proof}
\begin{remark}[On the symmetry of $\bar{X}$]
    Lemma~\ref{lemma:eigen-clip} requires that $\bar{X}$ is symmetric. That is, the input into the projection onto $\mathcal{C}_2$, defined by \eqref{eq:proj-C2}, must be symmetric. The alternating nature of the Douglas-Rachford algorithm means the projection onto $\mathcal{C}_1$, defined in \eqref{eq:proj-C1}, should output a symmetric $X$. The projection $\Pi_{\mathcal{C}_1}$ is guaranteed to output a symmetric $x$ because it satisfies the constraint $F(y)=X$ by design. $F(y)$ is defined in \eqref{eq:LMI-opt} to be a linear combination of symmetric matrices, so $X$ is symmetric by design. 
\end{remark}
\begin{theorem}
    Assume $\mathcal{C}_1$, $\mathcal{C}_2$ are closed, nonempty, convex sets and $\mathcal{C}_1\cap\mathcal{C}_2\neq\emptyset$. Then the sequence $z_k$ generated by Algorithm~\ref{alg:LMI} converges to a fixed point $z^\star$ of the Douglas-Rachford operator, and the shadow sequence 
    $$w_k=\Pi_{\mathcal{C}_1}(z_k,\hat{y}_\theta)$$
    converges to a point $w^\star\in\mathcal{C}_1\cap\mathcal{C}_2$. In particular, the output $y^\star=\Pi_m(w^\star)$ satisfies the LMI condition $F(y)\succeq0$. 
\end{theorem}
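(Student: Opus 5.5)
The plan is to recognize Algorithm~\ref{alg:LMI} as the classical Douglas--Rachford iteration $z_{k+1}=T z_k$ with $T = \tfrac12(I + R_g R_f)$, and then invoke the standard convergence theorem (Lions--Mercier; see also Bauschke--Combettes, Thm.~26.11). Here $R_h = 2\,\text{prox}_{h\sigma}-I$ is the reflected resolvent, and we take $f(z)=\|y-\hat{y}_\theta\|^2+\mathcal{I}_{\mathcal{C}_1}(z)$ and $g=\mathcal{I}_{\mathcal{C}_2}$. That theorem states that if $f,g$ are proper, closed, convex and $\operatorname{zer}(\partial f+\partial g)\neq\emptyset$, then $z_k$ converges to a fixed point $z^\star$ of $T$. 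It also states that the shadow sequence $\text{prox}_{f\sigma}(z_k)$ converges to $\text{prox}_{f\sigma}(z^\star)$, which is a minimizer of $f+g$.

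\textbf{Identifying the iteration.} First I check the algebra. Step 6 of Algorithm~\ref{alg:LMI}, $z_{k+1}=v_{k+1}-w_{k+1}+z_k$, equals $\tfrac12(\bar v_{k+1}+z_k)$ from \eqref{eq:DR-alg}, since $\bar v_{k+1}=2v_{k+1}-2w_{k+1}+z_k$. Next, $\Pi_{\mathcal{C}_1}(\cdot,\hat y_\theta)$ in \eqref{eq:proj-C1} is exactly $\text{prox}_{f\sigma}$; this follows from the completing-the-square identity \eqref{eq:C1-prox} and the closed-form minimizer of the strictly convex quadratic \eqref{eq:proj-C1-int2}. Finally, by Lemma~\ref{lemma:eigen-clip} together with the symmetry remark, the map \eqref{eq:proj-C2} (identity on $y$, clipping on $X$) is the Euclidean projection onto $\mathcal{C}_2$, i.e.\ $\text{prox}_{g\sigma}$. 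Its input $2w_{k+1}-z_k$ has symmetric $X$-block provided $z_0$ does, and this is preserved inductively.

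\textbf{Verifying the hypotheses.} $f$ is a strongly convex quadratic (in $y$) plus the indicator of the affine, hence closed convex, set $\mathcal{C}_1$, so it is proper, closed and convex. The same holds for $g$ by the assumptions on $\mathcal{C}_2$. Existence of a minimizer of $f+g$ comes from coercivity: $f+g$ is finite exactly on $\mathcal{C}_1\cap\mathcal{C}_2\neq\emptyset$, and on $\mathcal{C}_1$ the variable $X=F(y)$ is determined by $y$, so $f+g$ is coercive and lower semicontinuous. The remaining point is $\operatorname{zer}(\partial f+\partial g)\neq\emptyset$, i.e.\ that a minimizer satisfies the subdifferential sum rule. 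I expect this step to be the main obstacle, since it needs a constraint qualification. I would first try to obtain it from the structure of DR on indicators (Bauschke--Combettes Cor.~28.3 / Thm.~26.11 only needs existence of a zero), or to assume Slater (some $y$ with $F(y)\succ 0$) so that $\operatorname{ri}\,\mathcal{C}_1\cap\operatorname{ri}\,\mathcal{C}_2\neq\emptyset$ and the sum rule applies. If neither is available, I would fall back on the fact that DR for two closed convex sets with nonempty intersection still has $\operatorname{Fix} T\neq\emptyset$ and shadow limits in the intersection; this holds at least for the pure feasibility part, which is what the theorem's last claim needs.

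\textbf{Concluding.} Given $z_k\to z^\star$, continuity of the affine map $\Pi_{\mathcal{C}_1}(\cdot,\hat y_\theta)$ gives $w_k\to w^\star=\text{prox}_{f\sigma}(z^\star)$. The fixed-point relation $z^\star=z^\star+\text{prox}_g(2w^\star-z^\star)-w^\star$ gives $w^\star=\Pi_{\mathcal{C}_2}(2w^\star-z^\star)\in\mathcal{C}_2$, and $w^\star\in\mathcal{C}_1$ by construction. Hence $w^\star\in\mathcal{C}_1\cap\mathcal{C}_2$, so $X^\star=F(y^\star)\succeq 0$ with $y^\star=\Pi_m(w^\star)$. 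This is the LMI condition, and it matches the algorithm's output step $\Pi_m(\Pi_{\mathcal{C}_1}(z_{n_\text{iter}}))$ in the limit.
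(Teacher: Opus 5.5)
Your route is the same as the paper's: recognize Algorithm~\ref{alg:LMI} as Douglas--Rachford for $f=\|y-\hat y_\theta\|^2+\mathcal{I}_{\mathcal{C}_1}$ and $g=\mathcal{I}_{\mathcal{C}_2}$, then quote Bauschke--Combettes. That result needs $\operatorname{zer}(\partial f+\partial g)\neq\emptyset$, and the paper cites it without checking this. You are right that this is the crux, but neither of your non-Slater exits works, so there is a genuine gap.

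\textbf{The fallback to pure-feasibility DR does not apply.} With $f=\mathcal{I}_{\mathcal{C}_1}$ one gets $\operatorname{Fix}T\neq\emptyset$ for free, because every point of $\mathcal{C}_1\cap\mathcal{C}_2$ is a zero of $N_{\mathcal{C}_1}+N_{\mathcal{C}_2}$. Your $f$ also carries the pull toward $\hat y_\theta$, so a zero at $z=(y,F(y))$ is a KKT pair. It requires some $S\succeq 0$ with $\langle S,F(y)\rangle=0$ and $2(y-\hat y_\theta)_i=\langle F_i,S\rangle$ for all $i$.

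\textbf{No structural argument can produce such an $S$ in general.} Take $F(y)=\left[\begin{smallmatrix}0&y_1\\ y_1&y_2\end{smallmatrix}\right]$ and $\hat y_\theta=(1,0)$. The feasible set is $\{y_1=0,\ y_2\ge 0\}$. The unique minimizer of $f+g$, hence the only candidate zero, is $y=0$. There one needs $2S_{12}=-2$ and $S_{22}=0$, which is impossible for $S\succeq 0$. So $\operatorname{zer}(\partial f+\partial g)=\emptyset$, and therefore $\operatorname{Fix}T=\emptyset$, since $\operatorname{zer}(\partial f+\partial g)=\text{prox}_{f\sigma}(\operatorname{Fix}T)$. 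Because $T$ is continuous, any limit of $z_k$ would be a fixed point, so $z_k$ cannot converge. The theorem is false under its stated hypotheses.

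\textbf{The only sound completion is your Slater option, which adds a hypothesis.} Assume some $y$ with $F(y)\succ 0$, or directly that the projection problem admits such a multiplier $S$. Then $(y,F(y))\in\mathcal{C}_1\cap\operatorname{ri}\mathcal{C}_2$, and $\operatorname{ri}\mathcal{C}_1=\mathcal{C}_1$ since $\mathcal{C}_1$ is affine. The sum rule gives $\partial(f+g)=\partial f+\partial g$, so the minimizer from your coercivity argument is a zero. Your closing argument via $w^\star=\Pi_{\mathcal{C}_2}(2w^\star-z^\star)$ then finishes cleanly.

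\textbf{A separate, repairable inaccuracy.} The identity \eqref{eq:C1-prox}, which you and the paper take as given, drops a factor:
$\|y-\hat y_\theta\|^2+\frac{1}{2\sigma}\|y-\bar y\|^2+\frac{1}{2\sigma}\|X-\bar X\|_F^2=\frac{2\sigma+1}{2\sigma}\|y-y_{\text{avg}}\|^2+\frac{1}{2\sigma}\|X-\bar X\|_F^2+\text{const}$.
So \eqref{eq:proj-C1}, which weights the two terms equally, is not the Euclidean $\sigma$-prox of $f$ when $\sigma>0$. It is the $\sigma$-prox of $f$ for the inner product $\langle z,z'\rangle_W=y^\top y'+(2\sigma+1)\langle X,X'\rangle_F$. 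Since $\mathcal{C}_2=\R^m\times\S^n_+$ is a product set, \eqref{eq:proj-C2} is still the $W$-projection onto it. DR theory holds in any Hilbert space, and $\partial^W h=W^{-1}\partial h$ leaves the zero set unchanged. Your argument and the counterexample therefore go through once phrased in this metric, instead of claiming \eqref{eq:proj-C1} is exactly $\text{prox}_{f\sigma}$.
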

\begin{proof} Since $\mathcal{C}_1$ and $\mathcal{C}_2$ are convex, their indicator functions $\mathcal{I}_{\mathcal{C}_1}(z)$ and $\mathcal{I}_{\mathcal{C}_2}(z)$ are convex, making the objectives $f(z)=||y-\hat{y}_\theta||^2 + \mathcal{I}_{\mathcal{C}_1}(z)$ and $g(z)=\mathcal{I}_{\mathcal{C}_2}(z)$ convex. 

By Lemma~\ref{lemma:eigen-clip}, $\Pi_{\mathcal{C}_2}(z)$ is the true proximal operator for $g(z)=\mathcal{I}_{\mathcal{C}_2}(z)$. By definition, $\Pi_{\mathcal{C}_1}(z,\hat{y}_\theta)$ is the true proximal operator for $f(z)=||y-\hat{y}_\theta||^2 + \mathcal{I}_{\mathcal{C}_1}(z)$. Therefore, Algorithm~\ref{alg:LMI} is exactly an instance of Douglas-Rachford applied to the problem of minimizing $||y-\hat{y}_\theta||^2$ subject to $y\in\mathcal{C}_1\cap\mathcal{C}_2$.
The convergence therefore follows from standard Douglas-Rachford results \cite[Corollary~28.3]{bauschkeconvex}\cite[Corollary~1]{davis2017convergence}.

\end{proof}

\section{Numerical Experiments}

We evaluate LMI-Net on two problems for linear systems under disturbance: (i) invariant ellipsoid synthesis and (ii) joint controller and invariant ellipsoid design. 
For both tasks, we compare LMI-Net against a soft-constrained baseline trained with the same augmented loss described in \eqref{eq:soft-loss} and against CVXPY/SCS \cite{diamond2016cvxpy} as a solver baseline. The comparison metrics we report are constraint violation, runtime, and closed-loop instability when applicable. For ease of exposition, we provide detailed descriptions of the learning problem formulation under LMI constraints, dataset construction, and hyperparameter choice in the appendix.

It is worth noting that at inference time, the fixed LMI-Net can be adapted naturally to different Douglas-Rachford (DR) iterations. The number of DR iterations, therefore, provides a practical tuning parameter that trades off feasibility with computation speed, as the algorithm provably converges to a feasible point under increasing iterations.

\subsection{Invariant Ellipsoid Synthesis}\label{sec:linear_dist}

We first evaluate the disturbed linear-system invariant ellipsoid synthesis problem introduced in Section~\ref{sec:example}. We test on the training distribution and on two out-of-distribution testing datasets: \textsc{OOD-slow}, which moves eigenvalues closer to the imaginary axis and therefore has slower dynamics; \textsc{OOD-large}, which increases the magnitude of the disturbance. Table~\ref{tab:viol_frac} reports violation fractions, and Table~\ref{tab:speed} reports runtime.

The soft-constrained baseline degrades significantly in feasibility under distribution shift, with violation rates of 94.4\% on \textsc{OOD-slow} and 77.7\% on \textsc{OOD-large}. In contrast, LMI-Net improves strict constraint satisfaction monotonically as more DR iterations are used at inference time. At 2000 iterations, violations are already zero on \textsc{Train} and \textsc{OOD-slow}. With 4000 iterations, LMI-Net matches CVXPY feasibility on all three datasets, while remaining 9-35$\times$ faster than CVXPY/SCS. These results show that the hard-constrained approach in LMI-Net substantially improves out-of-distribution feasibility while preserving fast inference.

\begin{table}[ht]
    \centering
    \caption{Constraint violation fraction on training and testing sets}
    \label{tab:viol_frac}
    \begin{tabular}{lccc}
        \toprule
        Method & \textsc{Train}  & \textsc{OOD-slow}  & \textsc{OOD-large}  \\
        \midrule
        Soft constrained model     & 12.0\%   & 94.4\%    & 77.7\% \\
        LMI-Net (DR 500)   & 12.9\%    & 2.8\%     & 26.0\% \\
        LMI-Net (DR 1000)  & 4.9\%     & 1.4\%     & 12.7\% \\
        LMI-Net (DR 2000)  & \textbf{0.0\%}  & \textbf{0.0\%}  & 2.7\% \\
        LMI-Net (DR 3000)  & \textbf{0.0\%}  & \textbf{0.0\%}  & 0.3\% \\
        LMI-Net (DR 4000)  & \textbf{0.0\%}  & \textbf{0.0\%}  & \textbf{0.0}\% \\
        CVXPY/SCS     & 0.0\%       & 0.0\%       & 0.0\% \\
        \bottomrule
    \end{tabular}
\end{table}

\begin{table}[ht]
    \centering
    \caption{Computation time comparison (ms/sample)}
    \label{tab:speed}
    \begin{tabular}{lccc}
        \toprule
        Method & \textsc{Train} & \textsc{OOD-slow} & \textsc{OOD-large} \\
        \midrule
        Soft constrained model     & 0.2  & 0.6  & 0.1 \\
        LMI-Net (DR 500)   & 0.8  & 5.3  & 1.4 \\
        LMI-Net (DR 1000)  & 0.7  & 4.6  & 1.1 \\
        LMI-Net (DR 2000)  & 1.0  & 5.7  & 1.6 \\
        LMI-Net (DR 3000)  & 1.1  & 5.8  & 1.5 \\
        LMI-Net (DR 4000)  & 1.5  & 7.6  & 2.1 \\
        CVXPY/SCS     & 53.3 & 72.0 & 56.8 \\
        \bottomrule
    \end{tabular}
\end{table}

\subsection{Joint Controller and Invariant Ellipsoid Design}

We next consider joint synthesis of a stabilizing feedback controller and an invariant ellipsoid for a disturbed linear system. We test on the training distribution and an out-of-distribution (OOD) testing dataset, which increases the magnitude of unstable eigenvalues in the open-loop dynamics. Tables~\ref{tab:ctrl_train} and~\ref{tab:ctrl_ood} report violation rate, closed-loop instability, and runtime on the training and OOD datasets, respectively.

The soft-constrained baseline fails to satisfy the LMI constraint, and can destabilize the system, especially on OOD samples, where 79.2\% of predictions are infeasible and 56.7\% produce unstable closed-loop dynamics. LMI-Net eliminates closed-loop instability with 1000 DR iterations on both datasets, and continues to improve feasibility as the number of inference-time DR iterations increases. Figure~\ref{fig:traj} further illustrates this contrast on a representative OOD sample: LMI-Net produces a stabilizing controller whose trajectories remain within the certified invariant ellipsoid, while the soft-constrained model outputs a destabilizing gain.

On the training set, LMI-Net reaches zero violations at 3000 iterations while remaining 3.5$\times$ faster than CVXPY/SCS. On the OOD set, its violation percentage drops from 14.6\% at 500 iterations to 3.4\% at 4000 iterations. These observations validate the practical advantage of the LMI-Net, where the number of DR iterations serves as a tunable speed-feasibility tradeoff parameter.

\begin{table}[ht]
    \centering
    \caption{Performance comparison on the training dataset}
    \label{tab:ctrl_train}
    \begin{tabular}{lcccc}
        \toprule
        Method & violation \% & CL unstable \% & ms/sample \\
        \midrule
        Soft constrained model  & 46.6\% & 3.2\%          & 0.003 \\
        LMI-Net (DR 500)      & 3.2\%  & \textbf{0.0\%} & 0.208  \\
        LMI-Net (DR 1000)    & 1.2\%  & \textbf{0.0\%} & 0.414  \\
        LMI-Net (DR 2000)      & 0.6\%  & \textbf{0.0\%}     & 0.826 \\
        LMI-Net (DR 3000)      & \textbf{0.0\%} & \textbf{0.0\%} & 1.234 \\
        LMI-Net (DR 4000)      & \textbf{0.0\%} & \textbf{0.0\%} & 1.638  \\
        CVXPY (SCS)       & \textbf{0.0\%} & \textbf{0.0\%} & 4.290  \\
        \bottomrule
    \end{tabular}
\end{table}

\begin{table}[ht]
    \centering
    \caption{Performance comparison on the out-of-distribution dataset}
    \label{tab:ctrl_ood}
    \begin{tabular}{lcccc}
        \toprule
        Method & violation \% & CL unstable \% & ms/sample \\
        \midrule
        Soft constrained model  & 79.2\% & 56.7\%         & 0.006 \\
        LMI-Net (DR 500)       & 14.6\% & 0.6\%          & 0.331 \\
        LMI-Net (DR 1000)      & 9.0\%  & \textbf{0.0\%} & 0.661 \\
        LMI-Net (DR 2000)      & 5.6\%  & \textbf{0.0\%} & 1.317 \\
        LMI-Net (DR 3000)      & 4.5\%  & \textbf{0.0\%} & 1.973 \\
        LMI-Net (DR 4000)      & 3.4\%  & \textbf{0.0\%} & 2.628 \\
        CVXPY (SCS)       & \textbf{0.0\%} & \textbf{0.0\%} & 5.067 \\
        \bottomrule
    \end{tabular}
\end{table}

\begin{figure}[htbp]
    \centering
    \includegraphics[width=3.5in]{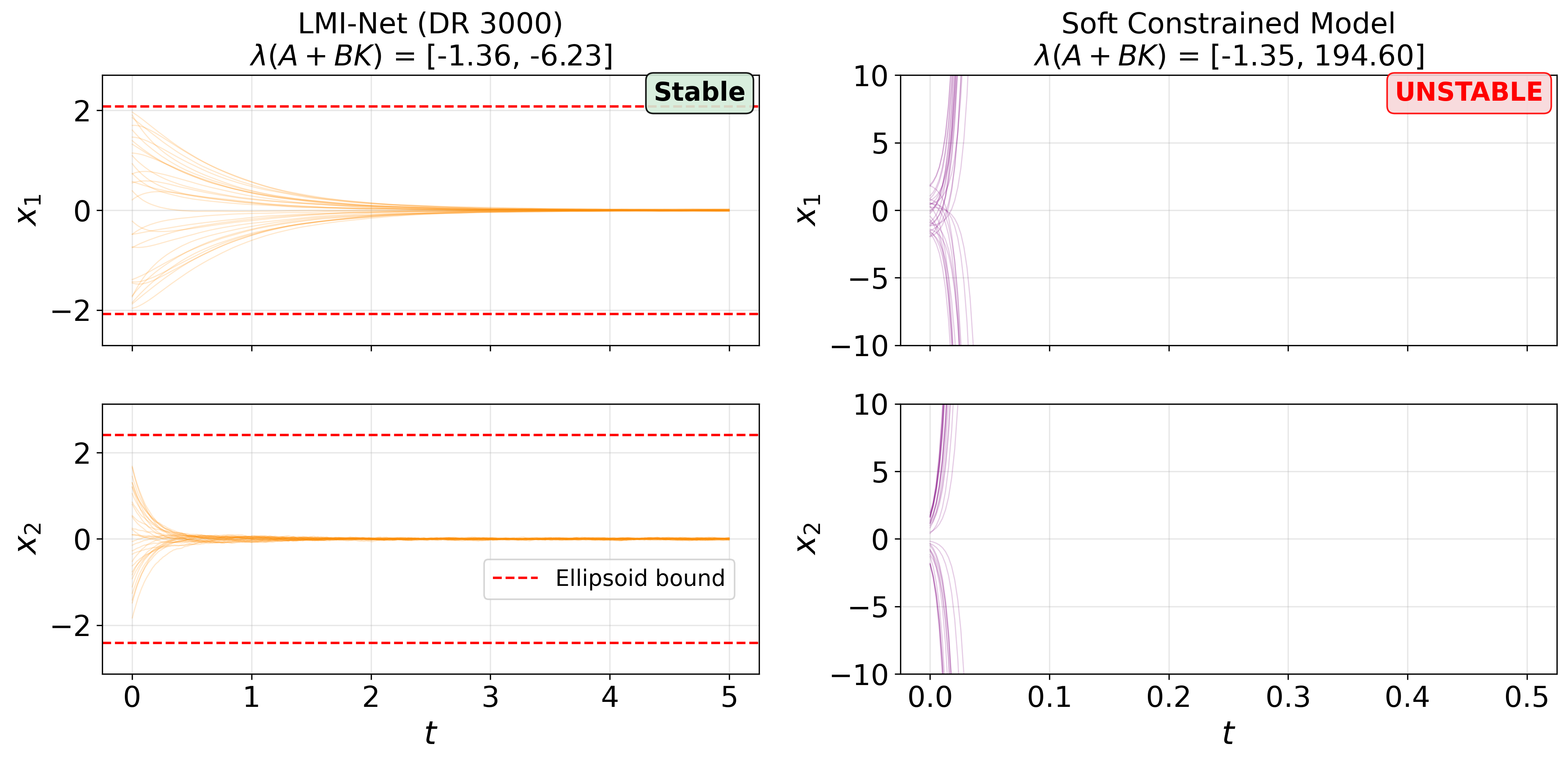}
    \caption{Our proposed LMI-Net with 3000 DR iterations during evaluation jointly learned a stabilizing feedback controller and an invariant ellipsoid, while the soft-constrained model outputs a feedback gain that results in an unstable closed-loop system.}
    \label{fig:traj}
\end{figure}


\section{Conclusions}
We introduced LMI-Net, a modular differentiable projection layer that turns a standard neural network into a feasible-by-construction model that satisfies linear matrix inequality (LMI) constraints.
By decomposing the LMI-constrained set into an affine constraint and a positive semidefinite cone, we leveraged Douglas-Rachford (DR) splitting to design an iterative forward pass and an efficient backward pass through implicit differentiation. We provide theoretical results that establish formal convergence guarantees as the number of DR iterations increases. In the numerical experiments based on classical LMI reformulations, LMI-Net substantially reduced constraint violation instances and improved closed-loop stability compared to soft-constrained models, while retaining lower computation cost over solving each semidefinite program from scratch. The experiments also highlight a practical advantage of the LMI-Net design, where DR iterations provide a simple knob for trading computation for tighter feasibility without retraining. Future work includes scaling to higher-dimensional problems with advanced backbone architectures and extending the framework to practical control tasks such as tube MPC and contraction-metric-based controller synthesis.

\section*{Appendix}

\subsection*{Soft-constrained Approaches for LMI-constrained Learning}
Current soft-constrained approaches incorporate a regularization term that penalizes constraint violation, an example of which is outlined in the following optimization problem.
\begin{align}
    L_{\text{soft}}(\theta) &= \frac{1}{N_{\text{train}}}\sum_{i=1}^{N_{\text{train}}} \bigg(c(\hat{y}(\xi^{(i)};\theta)) \notag\\
    &- \beta_{\text{soft}} \lambda_{\min}(F(\hat{y}(\xi^{(i)}; \theta); \xi^{(i)})) \bigg) \label{eq:soft-loss}
\end{align}
Here, $\lambda_{\min}(\cdot)$ is the minimum eigenvalue of the matrix, and $\beta_{\text{soft}} > 0$ is a weighting parameter. This approach cannot provide guarantees of constraint satisfaction, especially on $\xi$ values outside the training distributions.

\subsection*{Additional Details in Numerical Experiments}
We provide implementation details of numerical experiments in this section. In both experiments, the soft-constrained baseline and LMI-Net use the same two-layer MLP backbone with 64 neurons per layer and ReLU activations, and both are trained with the augmented loss in \eqref{eq:soft-loss} with $\beta_{\text{soft}}=100$. At inference time, the LMI-Net (fixed after training) is run with $\{500,1000,2000,3000,4000\}$ Douglas-Rachford (DR) iterations to study the runtime-feasibility tradeoff without retraining.

\subsection*{Invariant ellipsoid problem}\label{app:inv_details}

We use the linear system under disturbance, introduced in
Section~\ref{sec:example}, with $n_x=2$, $n_w=1$, and fixed $\alpha=0.1$.
When creating the datasets, each matrix $A$ is generated as
\[
A = U \,\mathrm{diag}(\lambda_1,\lambda_2)\, U^\top,
\]
where $\lambda_i \sim \mathrm{Uniform}(-\lambda_{\max},-\lambda_{\min})$
and $U$ is drawn uniformly from the orthogonal group. Each entry of
$B_w$ is sampled independently from $\mathcal{N}(0,\sigma_{B_w}^2)$.

The training distribution uses $(\lambda_{\min},\lambda_{\max},\sigma_{B_w})=(0.5,5.0,1.0)$. For the two out-of-distribution (OOD) testing sets, \textsc{OOD-slow} is generated with $(0.05,0.5,1.0)$, and \textsc{OOD-large} with $(0.5,5.0,3.0)$.

The network maps the flattened input $(A,B_w)$ to the upper-triangular
entries of $P$. The objective $c$ in \eqref{eq:learning-obj} is defined as $-\log\det(P)$, which would minimize the volume of the invariant ellipsoid. Both models are trained with Adam for 500 epochs. For LMI-Net, we use $\sigma=0.1$ and 500 DR iterations during training. A sample is counted towards constraint violation when the maximum eigenvalue of the LMI residual in
the left-hand side of \eqref{eq:exp1-lmi-simple} is positive.

\subsection*{Joint controller and invariant ellipsoid problem}\label{app:ctrl_details}

We consider a linear system with control input under bounded disturbance, assuming that $(A, B)$ is stabilizable:
\begin{equation}
\dot{x} = A x + B u + B_w w, \qquad w^\top w \le 1.
\end{equation}
The goal is to jointly design a feedback gain $K$ and an invariant ellipsoid $\{x: x^\top P x\leq 1\}$ under the control law $u = Kx$. Following the reformulation in \cite{boyd1994linear}, using the change of variables $Q=P^{-1}$ and $Y=KQ$, the problem can be reduced to the following LMI:
\begin{equation}
\begin{bmatrix}
Q A^\top + A Q + Y^\top B^\top + B Y + \alpha Q & B_w \\
B_w^\top & -\alpha I
\end{bmatrix} \preceq 0, Q\succeq \varepsilon I,
\end{equation}
with $\varepsilon = 10^{-3}$, and the fixed S-procedure parameter $\alpha = 0.1$. The two constraints are combined into a single block-diagonal LMI. After solving for $(Q,Y)$, the controller is recovered as $K = Y Q^{-1}$ and the invariant ellipsoid as $\mathcal{E}(Q) = \{x : x^\top Q^{-1} x \leq 1\}$.

Each sample in the training and testing datasets is a tuple $(A,B,B_w)$. Both matrices $B \in \mathbb{R}^{2 \times 1}$ and  $B_w \in \mathbb{R}^{2 \times 1}$ are filled with entries drawn from $\mathcal{N}(0,1)$, same for both training and testing sets. Each eigenvalue magnitude in matrix $A \in \R^{2\times 2}$, $|\lambda_i|$, is drawn uniformly from $[\lambda_{\min}, \lambda_{\max}]$, and its sign is assigned differently in the training set and testing set. The training set draws each eigenvalue magnitude uniformly from $[0.1, 1.0]$, then assigns a positive sign to each eigenvalue with 50\% probability independently. The out-of-distribution (OOD) test set shifts to eigenvalue magnitudes in $[0.3, 1.5]$ with all samples having one unstable eigenvalue. We filter out the samples within these datasets where $(A, B)$ are not stabilizable.

The neural network maps the flattened input $(\operatorname{vech}(A), \operatorname{vec}(B), \operatorname{vec}(B_w)) \in \mathbb{R}^{7}$ to the decision variables $(\operatorname{vech}(Q), \operatorname{vec}(Y)) \in \mathbb{R}^{5}$. The objective $c$ in \eqref{eq:learning-obj} is chosen as $\log\det(Q)$, which minimizes the invariant ellipsoid volume. We train both the soft-constrained model and our LMI-Net with Adam for 1000 epochs on the same training dataset. The LMI-Net is trained with 500 DR iterations and $\sigma = 0.01$. 

The evaluation metric \emph{Violation fraction} refers to the percentage of samples whose maximum eigenvalue violation of the LMI constraint exceeds 0. The metric \emph{CL instability} refers to the percentage of samples for which $A+BK$ has at least one eigenvalue with a positive real part. The metric \emph{Computation time} reports the wall-clock milliseconds per sample, evaluated on a workstation with an Intel Ultra 9 285K CPU and an NVIDIA RTX5080 GPU.





\newpage
\bibliographystyle{ieeetr}
\bibliography{refs}

\end{document}